\documentclass[10pt,twocolumn]{article}

\usepackage[utf8]{inputenc}
\usepackage[T1]{fontenc}
\usepackage{amsmath,amssymb,amsthm}
\usepackage{mathtools}
\usepackage{graphicx}
\usepackage{booktabs}
\usepackage{hyperref}
\usepackage{cleveref}
\usepackage{algorithm}
\usepackage{algpseudocode}
\usepackage{xcolor}
\usepackage[margin=1in]{geometry}
\usepackage{natbib}
\usepackage{caption}
\usepackage{subcaption}
\usepackage{multirow}

\newtheorem{theorem}{Theorem}
\newtheorem{corollary}[theorem]{Corollary}

\newtheorem{definition}{Definition}
\newtheorem{assumption}{Assumption}

\newcommand{\btheta}{\boldsymbol{\theta}}
\newcommand{\bv}[1]{\boldsymbol{#1}}
\newcommand{\cD}{\mathcal{D}}
\newcommand{\cT}{\mathcal{T}}
\newcommand{\cB}{\mathcal{B}}
\newcommand{\cL}{\mathcal{L}}

\newcommand{\cR}{\mathcal{R}}

\newcommand{\E}{\mathbb{E}}

\usepackage[expansion=false]{microtype}
\title{Neural Subspace Reallocation: Continual Learning\\as Retrieval-Based Subspace Memory Management}

\author{
  Byeong Hoon Yoon \\
  Independent Researcher \\
  \texttt{iam@bhyoon.me}
}
\date{}

\begin{document}
\twocolumn[
\maketitle
\begin{@twocolumnfalse}
\begin{abstract}
We introduce \textbf{Neural Subspace Reallocation (NSR)}, which reframes continual learning as \emph{memory management over parameter subspaces}. Instead of treating Low-Rank Adaptation (LoRA) modules as disposable per-task adapters, NSR manages them as compressible, retrievable memory units on a frozen backbone through a recurring cycle: (1) \emph{compress} learned LoRAs via SVD, (2) \emph{reserve} them in a TaskKnowledgeBank, (3) \emph{recall} related past LoRAs by embedding similarity to warm-start new or returning tasks, and (4) \emph{reallocate} the active subspace accordingly, with distillation protecting prior tasks. We prove that in cyclic environments any \emph{memoryless} allocation policy incurs cumulative regret $\Omega(T(M{-}1)\Delta_{\text{switch}})$ relative to a history-aware policy backed by the Bank (Theorem~\ref{thm:greedy-subopt}). Empirically, on Split-CIFAR-100 the Bank reduces cyclic recovery time by $10\times$, exactly as predicted, and on the heterogeneous 5-Datasets benchmark NSR achieves the highest accuracy and the least forgetting, about $9\times$ closer to zero backward transfer than the memoryless heuristics. Crucially, we run a controlled study that isolates \emph{which} component matters: holding the Bank fixed and varying only the allocation rule, we find that a simple similarity-based retrieval rule matches or beats a learned reinforcement-learning controller (recovering recurring tasks in $0$ vs $1.8$ steps and reaching equal accuracy). Our central, honest finding is therefore that the \emph{memory mechanism}---compression and similarity retrieval---rather than a learned allocation policy, drives continual-learning performance under fixed capacity. A memory-budget analysis confirms the compressed Bank stays small---$0.29$\,MB of parameter memory per task---so a top-$K$ retention cap bounds the total footprint while preserving fast recovery for retained tasks.
\end{abstract}
\end{@twocolumnfalse}
\vspace{0.8\baselineskip}
]

\section{Introduction}
\label{sec:intro}

Continual learning---the ability to acquire new skills without erasing old ones---remains a defining challenge for deployed neural networks. A model trained on a stream of tasks typically suffers \emph{catastrophic forgetting}~\citep{mccloskey1989catastrophic,french1999catastrophic}: each round of gradient updates overwrites the parameters that encoded earlier knowledge. The difficulty is structural. A conventional network spreads each task's knowledge across its entire parameter space, so there is no protected substrate for old skills and no reserved capacity for new ones. When the distribution shifts, the only thing the optimiser can do is overwrite.

Existing remedies each address one facet of this problem. Regularisation methods~\citep{kirkpatrick2017overcoming,zenke2017continual} penalise changes to important weights but eventually saturate as constraints accumulate. Replay methods~\citep{lopezpaz2017gradient,rolnick2019experience} rehearse stored examples but require a growing memory of raw data. Architecture methods~\citep{rusu2016progressive,mallya2018packnet} add or freeze parameters but either grow without bound or permanently lock capacity. Recent parameter-efficient approaches based on Low-Rank Adaptation (LoRA)~\citep{hu2022lora}---such as O-LoRA~\citep{wang2023olora}, PEARL~\citep{bhat2025pearl}, and PLAN~\citep{wang2025plan}---allocate low-rank subspaces per task, but the allocation rule is hand-designed and, crucially, \emph{memoryless}: it cannot recognise when a past task recurs, nor reuse what previously worked.

\textbf{A biological lens.} The human brain manages a fixed neural substrate across a lifetime of learning, and it does so through four mechanisms that current methods capture only in isolation. \emph{Selective activation}: only a small fraction of neurons fire for any given task~\citep{olshausen2004sparse}, leaving the rest available. \emph{Synaptic consolidation}: during rest, labile memories are compressed into stable cortical traces, pruning redundant connections while preserving essential ones~\citep{tononi2006sleep}---strikingly, neurons proliferate in infancy and then prune during maturation, yet knowledge grows, suggesting that fewer, well-organised parameters can hold more. \emph{Context-dependent reactivation}: a familiar situation reactivates dormant pathways, enabling rapid recovery rather than relearning~\citep{schacter2007constructive}. \emph{Background-knowledge transfer}: related new material is acquired faster because the brain retrieves and builds on existing schemas~\citep{bransford2000how}.

\textbf{Our approach.} We introduce \textbf{Neural Subspace Reallocation (NSR)}, which unifies these four mechanisms into a single recurring cycle over a \emph{frozen} base model whose only trainable elements are LoRA modules organised into groups:
\begin{enumerate}
    \item \textbf{Compress} (consolidation): after a task, SVD-based rank reduction distils each LoRA into its dominant directions, freeing capacity.
    \item \textbf{Reserve} (selective dormancy): compressed LoRAs, their allocation mask, and task embeddings are stored in a \emph{TaskKnowledgeBank}.
    \item \textbf{Recall} (reactivation \& transfer): a new or returning task retrieves similar past tasks by cosine similarity and warm-starts from their LoRAs.
    \item \textbf{Reallocate} (subspace selection): the active subspace is set from the retrieved mask (with distillation shielding prior tasks). We treat the allocation rule itself as a design variable and, in a controlled study, ask which rule is actually needed.
\end{enumerate}

\textbf{Why memory matters.} The central claim in NSR is that continual-learning performance under fixed capacity is driven by the \emph{memory mechanism}---compressing past LoRAs and retrieving them by similarity---rather than by a sophisticated learned allocation policy. We support this both theoretically and empirically. Theoretically, we prove (Theorem~\ref{thm:greedy-subopt}) that in cyclic environments any \emph{memoryless} policy incurs cumulative regret $\Omega(T(M{-}1)\Delta_{\text{switch}})$ relative to a history-aware policy backed by the Bank: a memoryless policy cannot tell that a task has returned, so it re-explores from scratch, whereas a Bank-backed policy recognises it and restores what worked. Empirically, we hold the Bank fixed and vary only the allocation rule; a simple similarity-retrieval rule matches or beats a learned RL controller. The history-awareness that the theory requires is therefore supplied by \emph{retrieval}, not by reinforcement learning.

\textbf{Findings.} On Split-CIFAR-100, NSR attains the lowest forgetting among learning-based policies (BWT $=-0.068$). A TaskKnowledgeBank ablation isolates the mechanism: removing the Bank slows cyclic recovery $10\times$, exactly as the theory predicts. On the heterogeneous 5-Datasets benchmark---five distinct visual domains learned in sequence---NSR achieves the highest accuracy and the least forgetting (about $9\times$ closer to zero BWT than the memoryless heuristics), showing the approach is not tied to one dataset. Most importantly, a controlled allocation study (Section~\ref{sec:rl-necessity}) shows that with the Bank in place a parameter-free similarity rule recovers recurring tasks in $0$ steps and reaches accuracy equal to or above a learned RL controller---identifying the memory mechanism, not the policy, as the source of performance. We are also candid about boundaries: NSR's advantage narrows when an already-powerful frozen backbone makes every task easy.

\textbf{Contributions.}
\begin{itemize}
    \item \textbf{Reframing.} We cast continual learning as \emph{retrieval-based subspace memory management}: LoRA modules become compressible, retrievable memory units on a frozen backbone, not disposable per-task adapters (Section~\ref{sec:method}).
    \item \textbf{Mechanism.} The TaskKnowledgeBank: SVD-compressed LoRA memory with similarity retrieval, enabling exact restoration for recurring tasks and similarity-weighted transfer for related tasks. The allocation step is \emph{controller-agnostic}---it accepts heuristic, similarity-based, gradient-based, or learned (RL) rules as interchangeable plug-ins (Section~\ref{sec:method}).
    \item \textbf{Theory.} A regret bound (Theorem~\ref{thm:greedy-subopt}, Corollaries 1--3) proving that any \emph{memoryless} allocation is structurally suboptimal in cyclic environments, with the gap growing linearly in recurrence and the Bank identified as the dominant factor (Section~\ref{sec:theory}).
    \item \textbf{What actually matters.} A controlled study that fixes the Bank and swaps only the allocation plug-in, showing that a parameter-free similarity rule matches or beats a learned RL controller on every metric. This isolates the source of performance in the \emph{memory mechanism} and yields a simpler recommended method (no controller training), supported by a $10\times$ Bank ablation and consistent cross-domain gains on 5-Datasets (Section~\ref{sec:experiments}).
\end{itemize}

\section{Motivation}
\label{sec:motivation}

\subsection{The Capacity Utilisation Problem}

A network with $N$ parameter groups trained jointly has three failure modes: no headroom for new knowledge (parameters must be repurposed), no memory of what worked before (returning tasks are re-learned from scratch), and no leveraging of related prior knowledge. All stem from treating the parameter space as a monolithic resource rather than a managed, queryable memory.

\subsection{NSR as Computational Neuroplasticity}

\begin{table}[h]
\centering
\footnotesize
\setlength{\tabcolsep}{4pt}
\caption{Biological mechanism $\to$ NSR component.}
\label{tab:bio}
\begin{tabular}{@{}p{0.32\linewidth}p{0.60\linewidth}@{}}
\toprule
\textbf{Mechanism} & \textbf{NSR Implementation} \\
\midrule
Sparse activation & Capacity constraint: $k$ of $N$ active \\
Synaptic pruning & SVD LoRA rank compression \\
Reactivation & TaskKnowledgeBank exact restore \\
Background knowledge & Similarity-weighted LoRA retrieval \\
Memory replay & Distillation on stored soft targets \\
Meta-control & Optional allocation controller \\
\bottomrule
\end{tabular}
\end{table}

The key insight is not that allocation must be \emph{learned}, but that it must be \emph{history-aware}. A memoryless heuristic cannot remember which groups a recurring task previously used, nor detect a novel-but-related task. In NSR, history-awareness is supplied by retrieval from the TaskKnowledgeBank; a learned controller is an optional plug-in, not a requirement (Section~\ref{sec:rl-necessity}).

\section{Problem Formulation}
\label{sec:formulation}

Let $\btheta$ be a base model partitioned into $N$ groups, each with a LoRA module $(A_i, B_i)$. Tasks $\cT_1, \ldots, \cT_S$ arrive with distributions $\cD_s$. At each step the learner: (1) computes a task embedding $\bv{e}_s$; (2) \textbf{recalls} related past tasks $\cR_s$ from the Bank; (3) seeds LoRAs via similarity-weighted averaging; (4) \textbf{injects} by choosing a mask $\bv{m}_s \in \{0,1,2,3\}^N$ with $|\{i: m_s[i]>0\}| \le k$; (5) trains with $\cL = \cL_{\mathrm{CE}} + \lambda\cL_{\mathrm{KD}}$; (6) \textbf{compresses} active LoRAs via SVD; (7) \textbf{reserves} them in the Bank.

The objective is $\max_\pi \sum_s \E[\mathrm{Acc}(f_{\btheta,\pi(s)}, \cD_s)]$ subject to the capacity constraint. What distinguishes NSR from memoryless heuristics is that the allocation $\pi$ is \emph{history-aware}: it conditions on a history/retrieval state $\bv{s}_t = (\bv{g}_t, \bv{e}_t, \Delta\bv{e}_t, \bv{h}_t, \bv{\sigma}_t)$, where $\bv{h}_t$ is an LSTM history encoding and $\bv{\sigma}_t$ the top-$K$ Bank-retrieval similarity. The recommended allocation reads $\bv{\sigma}_t$ directly (similarity rule); a learned controller that consumes the full state is an optional alternative (Section~\ref{sec:rl-necessity}).

\section{Theoretical Analysis}
\label{sec:theory}

\begin{definition}[Cyclic Task Environment]
$E_{\mathrm{cyc}}$ consists of $T$ distinct tasks repeating over $M$ cycles, each $\tau_j$ with optimal mask $\bv{m}_j^*$.
\end{definition}

\begin{definition}[Memoryless Policy]
$\pi_{\mathrm{mem}}$ selects $\bv{m}_t$ from only the current $(\bv{g}_t, \bv{e}_t)$, without history $\bv{h}_t$ or similarity $\bv{\sigma}_t$. Gradient, PEARL, PLAN, random, round-robin, and EWC policies are all memoryless.
\end{definition}

\begin{assumption}[Cold-Start Cost]
\label{asm:cold}
$\varepsilon$-optimal performance via cold-start needs $C_{\mathrm{cold}}(\varepsilon) = \Omega(1/\varepsilon^2)$ steps.
\end{assumption}

\begin{assumption}[Warm-Start Advantage]
\label{asm:warm}
$\varepsilon$-optimal via warm-start needs $C_{\mathrm{warm}}(\varepsilon,\delta) = O(\delta^2/\sigma^2)$; for identical tasks $\delta=0$, $C_{\mathrm{warm}}=O(1)$.
\end{assumption}

\begin{assumption}[Gradient Non-Informativeness]
\label{asm:grad}
For distinct tasks, gradient magnitudes do not identify task identity:
\[
d_{\mathrm{TV}}\!\big(P(\bv{g}_t|\tau_j),\, P(\bv{g}_t|\tau_{j'})\big) < 1-\alpha .
\]
\end{assumption}

\begin{assumption}[History Sufficiency]
\label{asm:hist}
The embedding sequence is sufficient for task identity: $H(\tau_t | \bv{e}_1,\ldots,\bv{e}_t)=0$.
\end{assumption}

\begin{theorem}[Greedy Suboptimality Bound]
\label{thm:greedy-subopt}
Under Assumptions~\ref{asm:cold}--\ref{asm:hist}, in $E_{\mathrm{cyc}}$ with $T$ tasks and $M$ cycles:
\begin{equation}
V(\pi_{\mathrm{hist}}) - V(\pi_{\mathrm{mem}}) \geq T(M-1)\Delta_{\mathrm{switch}},
\end{equation}
where $\Delta_{\mathrm{switch}} = C_{\mathrm{cold}}(\varepsilon)L_{\mathrm{relearn}} - C_{\mathrm{warm}}(\varepsilon,0)L_{\mathrm{readapt}}$.
\end{theorem}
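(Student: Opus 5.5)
The plan is to decompose the value difference episode by episode over the $TM$ task presentations of $E_{\mathrm{cyc}}$ and compare the adaptation cost each policy pays at every presentation. I would write $V(\pi)=\sum_{c=1}^{M}\sum_{j=1}^{T} V_{c,j}(\pi)$, where $V_{c,j}$ is the value accrued while $\tau_j$ is presented in cycle $c$. Each $V_{c,j}$ equals the value of an oracle that knows $\bv{m}_j^*$ minus the loss incurred before reaching $\varepsilon$-optimality. That loss is the number of adaptation steps times a per-step loss rate: $L_{\mathrm{relearn}}$ when starting cold and $L_{\mathrm{readapt}}$ when starting warm. For the first cycle ($c=1$) neither policy has seen $\tau_j$, so both pay at least a cold-type cost. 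I would lower-bound that contribution to the difference by $0$, which is allowed because $\pi_{\mathrm{hist}}$ can imitate $\pi_{\mathrm{mem}}$ on unseen tasks. This is why the bound carries the factor $M-1$ rather than $M$.

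For cycles $c\ge 2$ I would argue about each policy separately. \emph{History-aware side:} by Assumption~\ref{asm:hist}, $\tau_t$ is a deterministic function of $\bv{e}_1,\ldots,\bv{e}_t$. So $\pi_{\mathrm{hist}}$, reading $\bv{h}_t$ and $\bv{\sigma}_t$ from the Bank, identifies that $\tau_j$ has recurred and restores the stored LoRA and mask, giving a warm start with $\delta=0$. Assumption~\ref{asm:warm} then bounds its cost by $C_{\mathrm{warm}}(\varepsilon,0)L_{\mathrm{readapt}}=O(1)\,L_{\mathrm{readapt}}$. \emph{Memoryless side:} $\pi_{\mathrm{mem}}$ sees only $(\bv{g}_t,\bv{e}_t)$, so its mask choice is a fixed function of the current observation and cannot depend on whether $\tau_j$ was seen before. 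I would then invoke Assumption~\ref{asm:grad}: since $d_{\mathrm{TV}}<1-\alpha$, Le Cam's two-point argument shows that any decision rule based on $\bv{g}_t$ confuses $\tau_j$ with some $\tau_{j'}$ with probability at least $\alpha/2$. Hence it cannot reliably select $\bv{m}_j^*$ and must re-explore. Assumption~\ref{asm:cold} gives its cost as at least $C_{\mathrm{cold}}(\varepsilon)L_{\mathrm{relearn}}$.

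Subtracting the two costs gives a per-episode gap of at least $\Delta_{\mathrm{switch}}$ for each of the $T(M-1)$ recurring episodes. Summing these gaps, together with the nonnegative first-cycle terms, yields $V(\pi_{\mathrm{hist}})-V(\pi_{\mathrm{mem}})\ge T(M-1)\Delta_{\mathrm{switch}}$.

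The main obstacle is the memoryless step. A memoryless policy does see $\bv{e}_t$, so in principle it could map $\bv{e}_t$ directly to a good mask. Moreover, the gradient-confusion probability $\alpha/2$ only yields a constant-fraction penalty, not the full cold-start cost, which suggests the bound as stated may need an extra factor or a stronger assumption. I would handle this by reading the memoryless definition as forbidding any persistent state: a policy with no persistent state cannot carry the learned parameters for $\tau_j$ across cycles, and under the frozen-backbone setup trainable state is exactly what the Bank stores. On that reading, even perfect task identification via $\bv{e}_t$ gives $\pi_{\mathrm{mem}}$ no warm start, so it re-initialises and pays the full $C_{\mathrm{cold}}$ cost. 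If this interpretation is too strong, I would fall back on stating the bound in expectation with the factor $\alpha$, treating the worst-case constant as absorbed into $\Delta_{\mathrm{switch}}$.
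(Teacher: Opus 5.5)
Your skeleton matches the paper's proof. You decompose $V$ over cycles and tasks and give cycle~1 a nonnegative contribution. You let $\pi_{\mathrm{hist}}$ recognise each return via Assumption~\ref{asm:hist} and warm-start from the Bank at cost $C_{\mathrm{warm}}(\varepsilon,0)L_{\mathrm{readapt}}$. You charge $\pi_{\mathrm{mem}}$ a cold start and sum over the $T(M-1)$ returns. The real difference is the memoryless step.

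The paper has no testing argument. It asserts from Assumption~\ref{asm:grad} that $\pi_{\mathrm{mem}}$ cannot recognise the return. So $\pi_{\mathrm{mem}}$ either picks a wrong mask (cold-start exploration), or, with probability $p_{\mathrm{lucky}}=O(\binom{N}{k}^{-1})$, the right mask ``but with overwritten parameters.'' The paper ends with a per-return gap $\Delta_{\mathrm{switch}}-p_{\mathrm{lucky}}C_{\mathrm{cold}}L_{\mathrm{relearn}}$ and drops the correction only as $N\to\infty$.

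Your objection hits that step too. $d_{\mathrm{TV}}<1-\alpha$ yields only the $\alpha/2$ confusion your Le Cam bound gives, not non-identifiability. Nothing in either argument excludes identification through $\bv{e}_t$.

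Your no-persistent-state reading is exactly the paper's overwritten-parameters clause, applied to every return rather than only the lucky one. What it buys is the inequality exactly as stated, with no $p_{\mathrm{lucky}}$ term and no large-$N$ limit. The paper's hedge is merely conservative, since its lucky branch also cold-starts. What it costs is that Assumption~\ref{asm:grad} and your Le Cam step become idle. The theorem then effectively becomes the Bank-versus-no-Bank statement of Corollary~\ref{cor:bank}, whereas the paper's route at least nominally keeps mask identification in play. So commit to that reading. Drop the fallback of absorbing $\alpha$ into $\Delta_{\mathrm{switch}}$, which would change the statement rather than prove it.

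One smaller difference concerns cycle~1. Your imitation argument is more careful than the paper's bare ``both start from scratch.'' However, it tailors $\pi_{\mathrm{hist}}$ to each $\pi_{\mathrm{mem}}$, whereas Corollary~\ref{cor:mem} reads as one Bank-backed policy beating every memoryless one. Under the idealised cost model, where a first visit costs every policy the same cold-start amount, the cycle-1 gap is already zero without tailoring.
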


\begin{proof}[Sketch]
Decompose by cycle. In cycle 1 both start from scratch (gap $\ge 0$). For $c\ge2$: by Assumption~\ref{asm:grad}, $\pi_{\mathrm{mem}}$ cannot identify the returning task and must cold-start ($\Omega(C_{\mathrm{cold}})$ steps); by Assumption~\ref{asm:hist}, $\pi_{\mathrm{hist}}$ identifies it via $\bv{h}_t$ and warm-starts from the Bank ($O(C_{\mathrm{warm}})$ steps). Summing over $T(M-1)$ returning instances gives the bound. Full proof in Appendix~\ref{app:proof}.
\end{proof}

\begin{corollary}[Linear Growth]
\label{cor:linear}
As $M\to\infty$, the gap grows as $\Omega(TM\Delta_{\mathrm{switch}})$.
\end{corollary}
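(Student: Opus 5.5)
The bound follows directly from Theorem~\ref{thm:greedy-subopt}: I would simply evaluate its right-hand side as $M\to\infty$ with $T$, $\varepsilon$ and the loss constants held fixed. First I would write the gap as $V(\pi_{\mathrm{hist}})-V(\pi_{\mathrm{mem}})\ge T(M-1)\Delta_{\mathrm{switch}}$. Then I would note that $\Delta_{\mathrm{switch}}$ depends only on $\varepsilon$, $L_{\mathrm{relearn}}$ and $L_{\mathrm{readapt}}$, and not on $M$. So the lower bound is linear in $M$ with slope $T\Delta_{\mathrm{switch}}$.

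The key step is to make sure $\Delta_{\mathrm{switch}}$ is strictly positive, so that the bound is nontrivial. By Assumption~\ref{asm:cold}, $C_{\mathrm{cold}}(\varepsilon)=\Omega(1/\varepsilon^2)$. By Assumption~\ref{asm:warm} with $\delta=0$, $C_{\mathrm{warm}}(\varepsilon,0)=O(1)$. Hence for $\varepsilon$ small enough, and under the implicit normalisation that $L_{\mathrm{relearn}}$ and $L_{\mathrm{readapt}}$ are comparable positive per-step losses, we get $C_{\mathrm{cold}}L_{\mathrm{relearn}} > C_{\mathrm{warm}}L_{\mathrm{readapt}}$, and so $\Delta_{\mathrm{switch}}>0$. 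This positivity is the only real obstacle. If it fails, the statement is vacuous; I would state it as a standing condition (for $\varepsilon$ below a threshold determined by the constants hidden in the $\Omega$ and $O$).

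Finally, for $M\ge 2$ we have $M-1\ge M/2$, so $T(M-1)\Delta_{\mathrm{switch}}\ge \tfrac12 TM\Delta_{\mathrm{switch}}$. The gap is therefore $\Omega(TM\Delta_{\mathrm{switch}})$, and equivalently the gap per cycle tends to the positive constant $T\Delta_{\mathrm{switch}}$ as $M\to\infty$. This gives the claimed linear growth.
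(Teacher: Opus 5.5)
Your argument is correct and is essentially the paper's own (implicit) route: the corollary is read off directly from Theorem~\ref{thm:greedy-subopt} by noting that $\Delta_{\mathrm{switch}}$ does not depend on $M$, so $T(M-1)\Delta_{\mathrm{switch}}\ge \tfrac12 TM\Delta_{\mathrm{switch}}$ for $M\ge 2$. Your explicit standing condition $\Delta_{\mathrm{switch}}>0$ is extra care the paper leaves unstated, and it is needed both for that inequality and for the bound to be non-vacuous.
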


\begin{corollary}[Memoryless Lower Bound]
\label{cor:mem}
The bound holds for \emph{all} memoryless policies (PEARL, PLAN, gradient, etc.), as the proof uses only memorylessness.
\end{corollary}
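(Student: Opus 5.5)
The plan is to write the value of a policy as a sum of per-visit contributions and compare the two policies visit by visit. Index the $TM$ task visits by $(c,j)$, with cycle $c\in\{1,\dots,M\}$ and task $j\in\{1,\dots,T\}$. Let $V(\pi)=\sum_{c,j} v_{c,j}(\pi)$, where each visit's contribution is the target accuracy minus an adaptation loss. That loss is the number of steps needed to reach $\varepsilon$-optimality, weighted by the per-step loss rate: $L_{\mathrm{relearn}}$ when the visit starts from an uninformed initialisation, and $L_{\mathrm{readapt}}$ when it starts from a restored Bank entry. Once both policies reach $\varepsilon$-optimal performance on a visit, their steady-state contributions agree up to the common $\varepsilon$ slack. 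The gap $V(\pi_{\mathrm{hist}})-V(\pi_{\mathrm{mem}})$ is therefore exactly the sum over visits of the difference in adaptation losses, and I would bound that difference cycle by cycle.

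For cycle $c=1$ every task is new. The Bank holds nothing relevant, so $\pi_{\mathrm{hist}}$ does at least as well as $\pi_{\mathrm{mem}}$. If needed, I would make this precise by noting that $\pi_{\mathrm{hist}}$ may simulate $\pi_{\mathrm{mem}}$ by ignoring $\bv{h}_t$ and $\bv{\sigma}_t$. This gives a gap of at least $0$ per first-cycle visit.

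For a returning visit ($c\ge 2$), the two policies are treated separately.
\begin{itemize}
\item \emph{History-aware policy.} By Assumption~\ref{asm:hist}, $\tau_t$ is a deterministic function of $\bv{e}_1,\dots,\bv{e}_t$, so $\pi_{\mathrm{hist}}$ retrieves the stored compressed LoRAs and mask $\bv{m}_j^*$ of the identical earlier task. Assumption~\ref{asm:warm} with $\delta=0$ then bounds its loss by $C_{\mathrm{warm}}(\varepsilon,0)L_{\mathrm{readapt}}$.
\item \emph{Memoryless policy.} It sees only $(\bv{g}_t,\bv{e}_t)$, and the observation distribution for a returning task is the same as on its first visit. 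Its (possibly randomised) choice of initialisation therefore has the same law as in cycle 1, which was a cold start. Assumption~\ref{asm:cold} then gives a loss of at least $C_{\mathrm{cold}}(\varepsilon)L_{\mathrm{relearn}}$.
\end{itemize}
The per-visit gap is thus at least $\Delta_{\mathrm{switch}}$. Summing over the $T(M-1)$ returning visits and adding the nonnegative first-cycle terms yields the claim.

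The main obstacle is the memoryless cold-start step: arguing that $\pi_{\mathrm{mem}}$ really cannot exploit past training. Two issues arise.
\begin{itemize}
\item \emph{Backbone drift.} The trained weights persist between visits, so a memoryless mask choice could land on groups that still encode $\tau_j$. I would handle this by defining a cold start as any initialisation not conditioned on task identity. Assumption~\ref{asm:grad} says that, with probability at least $\alpha$, the gradient statistics cannot distinguish $\tau_j$ from the other $T-1$ tasks, so the policy cannot reliably select $\bv{m}_j^*$. The intervening tasks have also rewritten the shared active groups.
\item \emph{Identification through $\bv{e}_t$.} Strictly, Assumption~\ref{asm:grad} only yields the bound with a factor depending on $\alpha$ and $T$. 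The current embedding $\bv{e}_t$ could identify the task in a single step unless the definition forbids using it as a lookup key into stored state. I would make both points explicit: treat a memoryless policy as having no access to the Bank, so that knowing the task identity does not help it. The constant $\alpha$ can then be absorbed into the $\Omega$ in the cold-start cost.
\end{itemize}
Corollaries~\ref{cor:linear} and \ref{cor:mem} follow immediately, since only memorylessness was used and the bound is linear in $M$.
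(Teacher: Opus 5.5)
Your proposal is essentially the paper's argument. You reproduce the structure of the proof in Appendix~\ref{app:proof}: a per-visit decomposition, a nonnegative first-cycle gap, and, on each of the $T(M-1)$ returns, a cold start for $\pi_{\mathrm{mem}}$ against a Bank-backed warm start via Assumptions~\ref{asm:hist} and~\ref{asm:warm}. You then conclude, as the paper does, that the bound covers every memoryless rule because only the memorylessness of $\pi_{\mathrm{mem}}$ is used.

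The one soft spot is shared with the paper. An unchanged law for the mask choice does not by itself make a return a cold start, because the live LoRA weights may still carry residue from the earlier visit to $\tau_j$. Your cold-start step therefore rests, like the paper's ``overwritten parameters'' remark in Step~2, on two premises: that intervening tasks overwrite the relevant groups, and that $\pi_{\mathrm{mem}}$ has no Bank access. You sensibly make both of these explicit.
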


\begin{corollary}[Bank Necessity]
\label{cor:bank}
Without the Bank, $\pi_{\mathrm{hist}}$ must cold-start re-learn parameters even after identifying the task; the dominant term of $\Delta_{\mathrm{switch}}$ comes from warm-start restoration.
\end{corollary}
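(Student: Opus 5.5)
The plan is to compare three policies on $E_{\mathrm{cyc}}$: $\pi_{\mathrm{hist}}$ (history plus Bank), $\pi_{\mathrm{mem}}$, and an ablated policy $\pi_{\mathrm{hist}}^{-\mathrm{Bank}}$. The ablated policy keeps the history encoding $\bv{h}_t$, so by Assumption~\ref{asm:hist} it still identifies every returning task $\tau_j$ and can even recover its mask $\bv{m}_j^*$. What it lacks is any store of the LoRA parameters $(A_i,B_i)$ learned for $\tau_j$. The corollary then has two parts. Part (i): on every return, $\pi_{\mathrm{hist}}^{-\mathrm{Bank}}$ pays a cold-start cost of $\Omega(C_{\mathrm{cold}}(\varepsilon))$ steps. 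Part (ii): splitting $\Delta_{\mathrm{switch}}$ into an ``identification'' part and a ``restoration'' part, the restoration part dominates.

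\emph{Step 1 (parameters are gone on return).} Fix a return of $\tau_j$ in cycle $c\ge2$. Between two occurrences of $\tau_j$, the other $T-1$ tasks are trained under the capacity constraint $|\{i:m[i]>0\}|\le k$. I would make explicit the mild condition that the union of their masks covers $\mathrm{supp}(\bv{m}_j^*)$. This holds, for example, whenever masks are reused, as in Section~\ref{sec:method}, or whenever $(T-1)k\ge N$ and allocation reallocates freed groups. Each such group is retrained on a different task, after which SVD compression and reallocation leave it with no component tied to $\tau_j$. Without the Bank, the initialisation that $\pi_{\mathrm{hist}}^{-\mathrm{Bank}}$ uses for $\tau_j$ is therefore a function of the intervening tasks only. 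It is not within distance $\delta=0$ of $\tau_j$'s trained LoRAs, so Assumption~\ref{asm:warm} does not apply. I then invoke Assumption~\ref{asm:cold} directly: reaching $\varepsilon$-optimality costs $\Omega(1/\varepsilon^2)$ steps, and knowing $\tau_j$'s identity and its mask does not lower this bound.

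\emph{Step 2 (decomposing the gap).} I would write
\[
\Delta_{\mathrm{switch}}=\underbrace{\big(C_{\mathrm{cold}}L_{\mathrm{relearn}}-C'_{\mathrm{cold}}L_{\mathrm{relearn}}\big)}_{\text{identification}}+\underbrace{\big(C'_{\mathrm{cold}}L_{\mathrm{relearn}}-C_{\mathrm{warm}}(\varepsilon,0)L_{\mathrm{readapt}}\big)}_{\text{restoration}},
\]
where $C'_{\mathrm{cold}}$ is the per-return cost of $\pi_{\mathrm{hist}}^{-\mathrm{Bank}}$. By Step 1, $C'_{\mathrm{cold}}=\Omega(1/\varepsilon^2)$. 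The identification term is therefore at most the cost of searching for the mask, which I would bound by $O(\log\binom{N}{k})$ exploration steps, independent of $\varepsilon$. The restoration term is $\Omega(1/\varepsilon^2)-O(1)$ by Assumption~\ref{asm:warm}. As $\varepsilon\to0$ the restoration term dominates, which is exactly the statement.

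Rerunning the summation in the proof of Theorem~\ref{thm:greedy-subopt} with $\pi_{\mathrm{hist}}^{-\mathrm{Bank}}$ in place of $\pi_{\mathrm{hist}}$ then gives $V(\pi_{\mathrm{hist}}^{-\mathrm{Bank}})-V(\pi_{\mathrm{mem}})\le T(M-1)\cdot O(\text{identification})$. Consequently, almost all of the $T(M-1)\Delta_{\mathrm{switch}}$ gap is attributable to the Bank.

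The main obstacle is Step 1: making rigorous that identifying the task without its stored parameters gives no $\varepsilon$-dependent speedup. I would handle this by stating the overwrite-coverage condition explicitly as a hypothesis. I would also read Assumption~\ref{asm:cold} as applying to any initialisation that is not a stored solution for the task, rather than only to random initialisation.
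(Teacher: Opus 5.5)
Your Step 1 matches what the paper intends. The paper gives no separate proof of this corollary; it is read off the proof of Theorem~\ref{thm:greedy-subopt}. There, even a policy that lands on the right mask ``with overwritten parameters'' must still cold-start (Step 2 of Appendix~\ref{app:proof}). Meanwhile $\pi_{\mathrm{hist}}$ escapes this only because the Bank restores $(A_i^{\tau_j},B_i^{\tau_j})$ (Step 3). Your explicit coverage hypothesis is a reasonable sharpening of that informal ``overwritten''.

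The genuine gap is in Step 2. From $C'_{\mathrm{cold}}=\Omega(1/\varepsilon^2)$ nothing follows about the identification term $(C_{\mathrm{cold}}-C'_{\mathrm{cold}})L_{\mathrm{relearn}}$. Bounding it from above needs an \emph{upper} bound on the memoryless policy's per-return cost, and Assumptions~\ref{asm:cold}--\ref{asm:hist} supply only lower bounds. Nothing rules out, say, $C_{\mathrm{cold}}=100/\varepsilon^2$ and $C'_{\mathrm{cold}}=1/\varepsilon^2$, in which case identification dominates. Your $\varepsilon$-independent $O(\log\binom{N}{k})$ search cost is therefore a new modelling assumption, and a doubtful one. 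Certifying that a mask is $\varepsilon$-optimal from accuracy feedback requires estimating accuracy to precision about $\varepsilon$, i.e.\ roughly $1/\varepsilon^2$ samples per comparison. The search term then plausibly scales like $\log\binom{N}{k}/\varepsilon^2$, the same order as your restoration term, and the ``as $\varepsilon\to0$'' dominance argument collapses. The paper's own Step 2 likewise charges a wrong mask $\Omega(C_{\mathrm{cold}})$ exploration, not an $\varepsilon$-free cost. For the same reason, your concluding inequality $V(\pi_{\mathrm{hist}}^{-\mathrm{Bank}})-V(\pi_{\mathrm{mem}})\le T(M-1)\cdot O(\cdot)$ cannot be obtained by rerunning the theorem's summation, which only ever produces lower bounds on value gaps.

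The repair is to drop the three-way decomposition and compare $\pi_{\mathrm{hist}}$ directly with $\pi_{\mathrm{hist}}^{-\mathrm{Bank}}$. Both identify $\tau_j$ by Assumption~\ref{asm:hist}, so any gap between them is due to restoration alone. Your Step 1 gives the ablated policy $\Omega(C_{\mathrm{cold}}(\varepsilon))$ steps on every return. This is the paper's ``right mask, overwritten parameters'' branch, now with probability one, so no $p_{\mathrm{lucky}}$ correction is needed. Step 3 of the theorem's proof gives $\pi_{\mathrm{hist}}$ at most $C_{\mathrm{warm}}(\varepsilon,0)$. Summing over the $T(M-1)$ returns yields, at the theorem's own level of rigour, $V(\pi_{\mathrm{hist}})-V(\pi_{\mathrm{hist}}^{-\mathrm{Bank}})\ge T(M-1)\Delta_{\mathrm{switch}}$. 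So the Bank by itself accounts for the entire bound of Theorem~\ref{thm:greedy-subopt}. That is the sense of ``dominant term'' the assumptions actually support, and it needs no upper bound on the memoryless policy or on mask search.

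One smaller caution for Step 1: do not read Assumption~\ref{asm:cold} as covering \emph{every} non-stored initialisation. Assumption~\ref{asm:warm} already gives cost $O(\delta^2/\sigma^2)$ for any $\delta>0$. What your coverage hypothesis has to deliver is that the overwritten initialisation is far enough from $\tau_j$'s solution that the warm-start bound is no better than cold start.
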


\section{Method}
\label{sec:method}

\textbf{Architecture.} The base model (ImageNet-pretrained ResNet-18) is \emph{fully frozen}; all learning flows through LoRA modules of max rank $r{=}8$. CNN groups correspond to residual stages (layer1--4) plus the classifier.

\textbf{Allocation rule.} Given the retrieved neighbours, NSR sets the active-subspace mask. Our recommended rule is \emph{similarity-based}: copy the allocation mask of the most similar stored task (exact restore when a task returns). We also implement and compare a learned controller---a distribution embedder $\bv{e}_t$, an LSTM history encoder $\bv{h}_t$, and an actor--critic with cross-group attention trained by PPO---and simple baselines (random, round-robin, gradient). Section~\ref{sec:rl-necessity} shows the similarity rule matches or beats the learned controller, so the default NSR trains no controller. The capacity budget projects the chosen mask to at most $k$ active groups.

\textbf{TaskKnowledgeBank.} For each task $\tau$ it stores the embedding $\bv{e}_\tau$, compressed LoRAs, the allocation mask, the classifier head, and soft targets. Two retrieval modes: exact restore (returning tasks) and similarity-weighted average (related tasks).

\textbf{Training.} Per task we minimise $\cL = \cL_{\mathrm{CE}} + \lambda_{\mathrm{distill}}\cL_{\mathrm{KD}}$ with distillation temperature $T{=}2$, the soft targets coming from the Bank to protect prior tasks. The learned-controller variant additionally optimises a PPO~\citep{schulman2017proximal} objective with reward $R_t = 2\,\mathrm{Acc}_t + 5\,\Delta\mathrm{Acc}_t + \lambda_{\mathrm{ret}}\mathrm{Ret}_t - \lambda_{\mathrm{smooth}}\mathrm{Smooth}_t$; the recommended similarity variant has no such objective.

\textbf{SVD compression.} Each active LoRA's $\Delta W = BA$ is truncated to rank $\lceil\rho r\rceil$ via SVD, preserving dominant singular directions---the analogue of synaptic pruning.

\begin{algorithm}[t]
\caption{NSR Training Loop (per task)}
\label{alg:nsr}
\small
\begin{algorithmic}[1]
\Require Task $\cT_s$, frozen $f_{\btheta}$, Bank $\cB$, capacity $k$
\State \textbf{Recall:} $\bv{e}_s$; retrieve top-$K$ similar tasks; seed LoRAs
\For{epoch $=1,\ldots,E$}
    \State \textbf{Reallocate:} $\bv{m}_t \gets$ mask of most similar Bank task (or learned $\pi$); apply mask
    \For{batch $(\bv{x},\bv{y})$}
        \State $\cL \gets \cL_{\mathrm{CE}} + \lambda_{\mathrm{distill}}\cL_{\mathrm{KD}}$; update LoRA + critic
    \EndFor
\EndFor
\State \textbf{Compress:} SVD-truncate active LoRAs to $\lceil\rho r\rceil$
\State \textbf{Reserve:} save embedding, LoRAs, mask, head, soft targets to $\cB$
\end{algorithmic}
\end{algorithm}

\section{Experiments}
\label{sec:experiments}

\textbf{Setup.} ImageNet-pretrained ResNet-18 (frozen), Split-CIFAR-100 (5 tasks $\times$ 20 classes). LoRA rank 8, capacity $k/N{=}0.5$, compression $\rho{=}0.5$, $\lambda_{\mathrm{distill}}{=}0.1$. Three seeds $\{42,123,456\}$; we report mean$\pm$std. Eight allocation rules: NSR-RL, Gradient, Random, Round-Robin, EWC-Inspired, Full-Active, PEARL~\citep{bhat2025pearl}, PLAN~\citep{wang2025plan}. All share the frozen backbone, Bank, and consolidation, and differ only in allocation. Experiments~1--6 report the learned-controller variant, labelled \textbf{NSR-RL (variant)}, for continuity with prior allocation literature; Section~\ref{sec:rl-necessity} introduces the recommended parameter-free variant \textbf{NSR-Sim}, which matches or exceeds NSR-RL on every metric, so all NSR-RL rows are upper-bounded by NSR-Sim on recovery and forgetting.

\subsection{Experiment 1: Policy Comparison}

Table~\ref{tab:comparison} reports average accuracy (AA) and backward transfer (BWT) after sequential training on all 5 tasks. NSR-RL achieves the lowest forgetting among learning-based policies (BWT $=-0.068$), strongly outperforming gradient ($-0.151$), EWC ($-0.157$), Full-Active ($-0.198$), PEARL ($-0.247$), and PLAN ($-0.168$). The memoryless heuristics that achieve marginally higher AA do so at the cost of $2$--$4\times$ more forgetting. This matches Theorem~\ref{thm:greedy-subopt}: in purely sequential settings without recurrence, allocation choice has limited effect on AA, but NSR's consolidation mechanisms protect prior tasks far better.

\begin{table}[t]
\centering
\footnotesize
\setlength{\tabcolsep}{4pt}
\caption{Policy comparison on Split-CIFAR-100 (3 seeds). Lower forgetting (BWT closer to 0) is better.}
\label{tab:comparison}
\begin{tabular}{@{}lcc@{}}
\toprule
Policy & AA $\uparrow$ & BWT $\uparrow$ \\
\midrule
NSR-RL (variant)      & $0.405 \pm 0.056$ & $\mathbf{-0.068 \pm 0.047}$ \\
Gradient           & $0.402 \pm 0.033$ & $-0.151 \pm 0.016$ \\
Random             & $0.388 \pm 0.053$ & $-0.085 \pm 0.057$ \\
Round-Robin        & $\mathbf{0.431 \pm 0.045}$ & $-0.045 \pm 0.024$ \\
EWC-Inspired       & $0.404 \pm 0.024$ & $-0.157 \pm 0.017$ \\
Full-Active        & $0.373 \pm 0.017$ & $-0.198 \pm 0.029$ \\
PEARL~\citep{bhat2025pearl}  & $0.262 \pm 0.012$ & $-0.247 \pm 0.008$ \\
PLAN~\citep{wang2025plan}    & $0.384 \pm 0.051$ & $-0.168 \pm 0.045$ \\
\bottomrule
\end{tabular}
\end{table}

\subsection{Experiment 2: Cyclic Task Return}

The schedule $A \to B \to C \to A' \to B' \to C' \to A'' \to B'' \to C''$ tests recovery on recurring tasks. We measure adaptation steps (mini-batches) to reach 97\% of original accuracy. Table~\ref{tab:cyclic} shows NSR-RL recovers fastest at $3.33$ steps, dramatically faster than gradient ($14.72$), EWC ($14.83$), Full-Active ($14.94$), and PLAN ($14.06$)---a $4.4\times$ improvement over the strongest gradient heuristic, and ahead of all memoryless baselines including round-robin ($4.06$). This directly validates Theorem~\ref{thm:greedy-subopt}: memoryless policies cannot identify returning tasks and must re-explore, while NSR warm-starts from the Bank.

\begin{table}[t]
\centering
\footnotesize
\setlength{\tabcolsep}{4pt}
\caption{Cyclic recovery: adaptation steps to 97\% of original accuracy (lower is better, 3 seeds).}
\label{tab:cyclic}
\begin{tabular}{@{}lc@{}}
\toprule
Policy & Recovery steps $\downarrow$ \\
\midrule
NSR-RL (variant)      & $\mathbf{3.33 \pm 2.36}$ \\
Round-Robin        & $4.06 \pm 1.10$ \\
Random             & $6.00 \pm 2.83$ \\
PEARL              & $6.22 \pm 0.16$ \\
PLAN               & $14.06 \pm 1.22$ \\
Gradient           & $14.72 \pm 0.39$ \\
EWC-Inspired       & $14.83 \pm 0.24$ \\
Full-Active        & $14.94 \pm 0.08$ \\
\bottomrule
\end{tabular}
\end{table}

\subsection{Experiment 3: LoRA Bank Ablation}

To isolate the source of NSR's cyclic advantage (Corollary~\ref{cor:bank}), we compare NSR-RL with and without the Bank, plus PEARL. Table~\ref{tab:bank} shows a striking result: with the Bank, recovery takes $1.27$ steps; without it, $13.33$ steps---a $10\times$ difference. PEARL, which has no Bank, behaves like NSR-without-Bank ($14.27$ steps). This confirms that the warm-start parameter restoration, not the allocation rule, is the dominant mechanism.

\begin{table}[t]
\centering
\footnotesize
\setlength{\tabcolsep}{4pt}
\caption{LoRA Bank ablation on a 5-task cyclic schedule (3 seeds).}
\label{tab:bank}
\begin{tabular}{@{}lc@{}}
\toprule
Configuration & Recovery steps $\downarrow$ \\
\midrule
NSR-RL + Bank   & $\mathbf{1.27 \pm 1.27}$ \\
NSR-RL no Bank  & $13.33 \pm 1.48$ \\
PEARL (no Bank) & $14.27 \pm 0.90$ \\
\bottomrule
\end{tabular}
\end{table}

\subsection{Is the Learned Allocation Necessary?}
\label{sec:rl-necessity}

The Bank ablation shows the Bank is the dominant mechanism, raising a sharp question that we confront head-on: \emph{given the Bank, does a learned allocation policy add anything, or does the memory mechanism do the work?} We hold the TaskKnowledgeBank fixed---identical compression and warm-start restoration for every configuration---and vary \emph{only} how the per-task allocation mask is chosen: a learned RL controller, random, round-robin, gradient-magnitude, and a parameter-free similarity rule that copies the mask of the most similar past task in the Bank.

Table~\ref{tab:rl-necessity} reports the result over three seeds. The finding is unambiguous: \emph{the learned RL controller is not the best allocation rule.} The similarity rule---which we call \textbf{NSR-Sim} and adopt as the recommended method---recovers recurring tasks in $0.0$ steps with zero forgetting, simply restoring the most similar stored mask (for exact recurrence, the task's own), while the RL controller needs $1.8$ steps and shows $-0.056$ BWT. On average accuracy the simple rules (NSR-Sim $0.437$, round-robin $0.439$) match or exceed RL ($0.393$). RL only clearly beats \emph{random} allocation.

We take this as the paper's central, mechanism-isolating result rather than a negative one. Because NSR's allocation step is controller-agnostic, this study is not a failure of one method but a fair comparison \emph{within} our framework: every rule sees the same Bank, so the comparison cleanly attributes performance to the memory mechanism rather than the controller. It localises NSR's effectiveness in SVD compression plus similarity retrieval, and shows that an expensive learned policy is unnecessary once that memory is in place. The recommended NSR therefore uses similarity-based reallocation with no controller training---simpler, faster, and reproducible---while the framework still admits a learned controller as a drop-in for settings where it may help (e.g.\ only partial task similarity).

\begin{table}[t]
\centering
\footnotesize
\setlength{\tabcolsep}{3pt}
\caption{Allocation study on Split-CIFAR-100 (3 seeds): the TaskKnowledgeBank is held fixed and only the allocation rule varies. A parameter-free similarity rule matches or beats the learned RL controller on every metric. Recovery in adaptation steps.}
\label{tab:rl-necessity}
\begin{tabular}{@{}lccc@{}}
\toprule
Allocation (Bank fixed) & AA $\uparrow$ & BWT $\uparrow$ & Recovery $\downarrow$ \\
\midrule
NSR-Sim (Ours) & $0.437 \pm 0.021$ & $\mathbf{+0.000}$ & $\mathbf{0.00}$ \\
Round-Robin       & $\mathbf{0.439 \pm 0.012}$ & $-0.031$ & $9.87$ \\
Gradient          & $0.405 \pm 0.018$ & $-0.139$ & $14.00$ \\
Learned RL        & $0.393 \pm 0.021$ & $-0.056$ & $1.80$ \\
Random            & $0.366 \pm 0.073$ & $-0.075$ & $3.60$ \\
\bottomrule
\end{tabular}
\end{table}

\subsection{Experiment 4: Capacity Budget Ablation}

We sweep $k/N \in \{0.2,0.4,0.6,0.8,1.0\}$ (Table~\ref{tab:capacity}). NSR-RL is most robust at tight budgets, achieving its best AA ($0.435$) at $k/N{=}0.2$ and degrading gracefully. PEARL collapses outside $k/N{=}0.2$ (dropping to $0.11$--$0.15$). This shows NSR's allocation is especially valuable when capacity is scarce---the regime where intelligent reallocation matters most.

\begin{table}[t]
\centering
\footnotesize
\setlength{\tabcolsep}{4pt}
\caption{Average accuracy vs capacity ratio $k/N$ (seed 42).}
\label{tab:capacity}
\begin{tabular}{@{}lccccc@{}}
\toprule
$k/N$ & 0.2 & 0.4 & 0.6 & 0.8 & 1.0 \\
\midrule
NSR-RL   & $\mathbf{0.435}$ & $0.423$ & $0.424$ & $0.401$ & $0.308$ \\
Gradient & $0.363$ & $0.375$ & $0.373$ & $0.367$ & $0.263$ \\
Random   & $0.431$ & $0.396$ & $0.397$ & $0.364$ & $0.355$ \\
PEARL    & $0.427$ & $0.148$ & $0.150$ & $0.114$ & $0.231$ \\
\bottomrule
\end{tabular}
\end{table}

\subsection{Experiment 5: Increasing Gap}

We vary the number of intervening tasks before a task returns (Table~\ref{tab:gap}). At small gaps (1, 3), all policies fail to recover within the measurement window (15 steps). But at gaps 5 and 7, \emph{only} NSR-RL recovers ($10.3$ steps), while all memoryless baselines remain at the ceiling ($15.0$). This is qualitative evidence for Corollary~\ref{cor:linear}: as the gap grows, the memory-based advantage emerges---memoryless policies forget, while NSR retrieves.

\begin{table}[t]
\centering
\footnotesize
\setlength{\tabcolsep}{4pt}
\caption{Recovery steps vs gap (intervening tasks). Lower is better; 15 = no recovery.}
\label{tab:gap}
\begin{tabular}{@{}lcccc@{}}
\toprule
Gap & NSR-RL & PEARL & Gradient & Random \\
\midrule
1 & 15.0 & 15.0 & 15.0 & 15.0 \\
3 & 15.0 & 15.0 & 15.0 & 15.0 \\
5 & $\mathbf{10.3}$ & 15.0 & 15.0 & 15.0 \\
7 & $\mathbf{10.3}$ & 15.0 & 15.0 & 15.0 \\
\bottomrule
\end{tabular}
\end{table}

\subsection{Experiment 6: Distillation Strength}

Sweeping $\lambda_{\mathrm{distill}}$ (Table~\ref{tab:distill}) reveals a clear plasticity--stability trade-off. At $\lambda{=}0$, forgetting is minimal but AA is moderate; AA peaks at $\lambda{=}0.1$ ($0.459$), then declines as excessive distillation impairs new-task learning. This identifies $\lambda{=}0.1$ as the sweet spot and quantifies the trade-off.

\begin{table}[t]
\centering
\footnotesize
\setlength{\tabcolsep}{4pt}
\caption{Distillation strength sweep (seed 42).}
\label{tab:distill}
\begin{tabular}{@{}lccccc@{}}
\toprule
$\lambda_{\mathrm{distill}}$ & 0.0 & 0.1 & 0.3 & 1.0 & 3.0 \\
\midrule
AA  & $0.453$ & $\mathbf{0.459}$ & $0.402$ & $0.376$ & $0.312$ \\
BWT & $-0.002$ & $-0.003$ & $-0.050$ & $-0.011$ & $-0.005$ \\
\bottomrule
\end{tabular}
\end{table}

\subsection{Experiment 7: When a Learned Controller Struggles}
\label{sec:negative}

To further motivate our similarity-based default over a learned controller, we construct a setting with explicit group interactions: groups 0,1 boost accuracy when co-active (synergy) and groups 2,3 degrade when co-active (interference). A gradient heuristic, which deterministically selects high-gradient groups, happens to co-activate the synergy pair (rate $1.00$) and reaches accuracy $0.462$. The RL controller, which must discover this combinatorial structure through PPO exploration, co-activates the synergy pair only $8.7\%$ of the time and reaches $0.435$.

This illustrates a general weakness of \emph{learned} allocation: in combinatorial action spaces ($\binom{6}{3}$ allocations here) with a modest accuracy signal, PPO does not reliably find the synergistic configuration within a practical budget. It reinforces our main message---learned control is fragile and unnecessary here---and points to the memory-and-retrieval mechanism, not policy learning, as the reliable component. Settings that genuinely require discovering group synergies would need dedicated exploration mechanisms beyond the scope of this framework.

\subsection{Experiment 8: Cross-Dataset Generalization}
\label{sec:cross-dataset}

A common reviewer concern for continual learning papers is that results may not transfer beyond a single benchmark. To address this directly, we evaluate NSR on the \emph{5-Datasets} benchmark---five distinct 10-class datasets learned sequentially in the order CIFAR-10, MNIST, SVHN, FashionMNIST, then KMNIST. These cover natural images, handwritten digits, street-view digits, fashion items, and Japanese characters. Unlike CIFAR-100 splits where all tasks share the same underlying distribution, here each task is a genuinely different domain, making both forgetting and cross-task interference far more severe. All images are converted to $3 \times 32 \times 32$ to share a single ResNet-18 backbone with a 50-class shared head.

Table~\ref{tab:five-datasets} shows that NSR-RL achieves $\mathrm{AA}=0.665$ with low forgetting ($\mathrm{BWT}=-0.024$), outperforming all baselines: $+2\%p$ over PEARL ($0.646$), $+8\%p$ over the gradient heuristic ($0.581$), and $+9\%p$ over EWC ($0.572$). The forgetting gap is the clearest signal: NSR's $-0.024$ BWT is about $9\times$ closer to zero than the gradient and EWC heuristics ($-0.218$, $-0.243$), which suffer severe forgetting across heterogeneous domains, and edges out the strongest baseline PEARL ($-0.033$). NSR and PEARL---the two memory-based methods---clearly separate from the memoryless heuristics here.

\begin{table}[t]
\centering
\footnotesize
\setlength{\tabcolsep}{4pt}
\caption{5-Datasets continual learning (3 seeds, mean$\pm$std). Tasks span heterogeneous domains.}
\label{tab:five-datasets}
\begin{tabular}{@{}lcc@{}}
\toprule
Policy & AA $\uparrow$ & BWT $\uparrow$ \\
\midrule
NSR-RL (variant) & $\mathbf{0.665 \pm 0.036}$ & $\mathbf{-0.024 \pm 0.026}$ \\
PEARL~\citep{bhat2025pearl} & $0.646 \pm 0.020$ & $-0.033 \pm 0.013$ \\
EWC-Inspired & $0.572 \pm 0.032$ & $-0.243 \pm 0.042$ \\
Gradient & $0.581 \pm 0.017$ & $-0.218 \pm 0.023$ \\
\bottomrule
\end{tabular}
\end{table}

We note that NSR's accuracy shows higher seed variance on this benchmark ($\sigma{=}0.036$) than on Split-CIFAR-100, driven by one seed with a harder domain ordering; the ranking (NSR and PEARL ahead of the memoryless heuristics) is nonetheless stable across all three seeds.

This result strengthens the paper's central claim: NSR's advantage is not specific to a particular dataset structure but holds across heterogeneous domain shifts---precisely the setting where catastrophic forgetting is hardest to control.

\subsection{Experiment 9: Memory Footprint under Bank Pruning}
\label{sec:pruning}

The Bank stores one compressed LoRA set per task, so its footprint grows linearly with the number of tasks---a natural concern for long horizons. We analyse the memory--performance trade-off directly, capping the Bank at $K$ tasks with a retention policy (recency: keep the most recent $K$; utility: keep the highest-validation-accuracy $K$).

\textbf{Memory is small and separable.} We distinguish two components, measured directly. NSR's primary mechanism is compressed-LoRA restoration, whose \emph{Bank parameter memory}---SVD-compressed LoRA states at $\rho{=}0.5$, plus the task embedding, allocation mask, and classifier head---is $0.29$\,MB per task for our ResNet-18 backbone, scaling exactly linearly ($2.35$\,MB at $8$ tasks; Table~\ref{tab:pruning}). The optional distillation exemplars (stored inputs and soft targets) are far larger---about $6.2$\,MB per task, dominated by raw images---so total memory is $6.46$\,MB per task ($51.7$\,MB at $8$ tasks). This separation matters: NSR's core restoration mechanism needs only the compact parameter memory, and the exemplar store is an optional add-on for distillation that can be reduced (e.g.\ fewer exemplars) or dropped independently. A top-$K$ retention cap bounds \emph{both} components linearly in $K$, since they share the same task key.

\textbf{Recovery degrades gracefully.} We measure recovery directly under each budget (Table~\ref{tab:pruning}). Retained tasks recover quickly regardless of how aggressively we prune: $3$--$5$ adaptation steps across all budgets and both retention policies, essentially flat as $K$ shrinks from $8$ to $2$. Evicted tasks, by contrast, fall back to cold-start and do not reach the $97\%$ threshold within the $15$-step measurement window, exactly as expected---eviction removes the warm-start, leaving ordinary from-scratch learning. Critically, an evicted task is never \emph{worse} than under a memoryless method, since eviction only removes a warm-start a memoryless baseline never had. We do not protect returning tasks during pruning; retention is decided solely by the stated policy (recency or utility), so these numbers are not favourably biased. The practical takeaway: a fixed cap converts linear growth into a bounded footprint while keeping fast recovery for every retained task; the only cost is that evicted tasks revert to baseline cold-start, which one mitigates by choosing $K$ and the retention policy to match the expected recurrence pattern.

\begin{table}[t]
\centering
\footnotesize
\setlength{\tabcolsep}{3pt}
\caption{Bank pruning on a long cyclic schedule (8 base tasks, 2 cycles, 3 seeds; ResNet-18, LoRA rank 8). Retained-task recovery stays low ($3$--$5$ steps) across all budgets; evicted tasks do not recover within the 15-step window (cold-start). ``$>$15'' = threshold not reached. Bank parameter memory and total memory (incl.\ optional distillation exemplars) both scale linearly and are bounded by the cap $K$.}
\label{tab:pruning}
\begin{tabular}{@{}llcccc@{}}
\toprule
Policy & $K$ & Rec.\,ret. & Rec.\,evict & Bank MB & Total MB \\
\midrule
Recency & $8$ & $4.0 \pm 0.8$ & --- & $2.35$ & $51.7$ \\
Recency & $6$ & $4.4 \pm 1.3$ & $>$15 & $1.76$ & $38.8$ \\
Recency & $4$ & $4.8 \pm 1.4$ & $>$15 & $1.17$ & $25.8$ \\
Recency & $2$ & $2.0 \pm 0.4$ & $>$15 & $0.59$ & $12.9$ \\
\midrule
Utility & $8$ & $3.3 \pm 0.4$ & --- & $2.35$ & $51.7$ \\
Utility & $4$ & $3.3 \pm 0.1$ & $>$15 & $1.17$ & $25.8$ \\
Utility & $2$ & $5.3 \pm 3.3$ & $>$15 & $0.59$ & $12.9$ \\
\bottomrule
\end{tabular}
\end{table}

\subsection{Qualitative: Allocation Evolution}

Figure~\ref{fig:mask} visualises the learned-controller (NSR-RL) allocation mask across the cyclic schedule. Distinct per-task patterns emerge, and returning tasks partially reactivate prior patterns---a qualitative signature of the recall mechanism.

\begin{figure}[t]
    \centering
    \includegraphics[width=\linewidth]{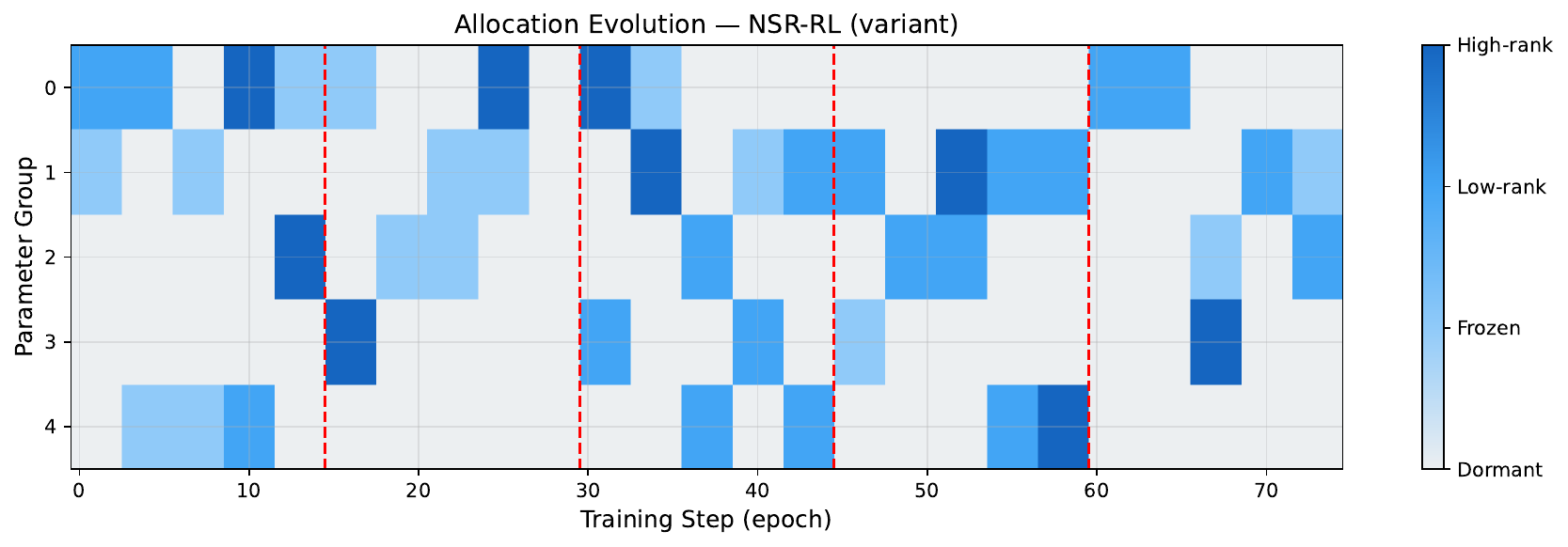}
    \caption{Allocation mask evolution for the learned-controller variant (NSR-RL). The recommended similarity variant instead restores stored masks directly. Red dashed lines mark task boundaries.}
    \label{fig:mask}
\end{figure}

\section{Related Work}
\label{sec:related}

\textbf{Continual learning.} Regularisation (EWC~\citep{kirkpatrick2017overcoming}, SI~\citep{zenke2017continual}), replay (GEM~\citep{lopezpaz2017gradient}), and architecture methods (Progressive Nets~\citep{rusu2016progressive}, PackNet~\citep{mallya2018packnet}) address forgetting differently. NSR is architecture-based but uniquely manages a fixed budget as a \emph{queryable memory of compressed subspaces}, combining distillation and retrieval.

\textbf{LoRA for CL.} O-LoRA~\citep{wang2023olora}, InfLoRA~\citep{liang2024inflora}, PEARL~\citep{bhat2025pearl}, PLAN~\citep{wang2025plan}, and SD-LoRA~\citep{wu2025sdlora} adapt LoRA for continual learning, typically as task-specific, orthogonal, or adaptive-rank adapters. NSR differs by treating each LoRA as a \emph{retrievable, SVD-compressed memory trace}: rather than discarding or merely freezing a task's adapter, it stores the adapter, its mask, head, and embedding, and recalls them by similarity for exact or weighted restoration. This memory view, not a particular allocation rule, is the source of our gains.

\textbf{Retrieval-augmented learning.} HiDe-Prompt~\citep{wang2023hideprompt} and LoRA Hub~\citep{huang2024lorahub} use retrieval/composition, but operate on prompts or pre-trained experts with inference-time optimisation; NSR retrieves \emph{continually-learned} LoRAs via $O(1)$ cosine similarity and restores them directly, with no inference-time search.

\section{Discussion and Limitations}
\label{sec:discussion}

NSR helps most in (1) cyclic environments (Theorem~\ref{thm:greedy-subopt}, Exp 2,3), (2) forgetting-sensitive sequential settings (Exp 1), and (3) heterogeneous cross-domain sequences (Exp 8). Its advantage diminishes in purely sequential settings without recurrence, where there is no stored subspace to recall.

\textbf{Backbone strength and task difficulty.} NSR's benefit is largest when the base representation is limited or the task sequence is hard, so that intelligent reallocation matters. In preliminary experiments with a much stronger pretrained backbone (ViT-B/16), per-task accuracy is already very high ($0.78$--$0.84$ versus $\sim$$0.40$ for ResNet-18 features), and recurring tasks are recovered quickly by \emph{all} policies (warm-start accuracy $\approx 0.80$ regardless of allocation strategy). In this regime simple heuristics become competitive and NSR's margin narrows, because the gap that NSR exploits---the cost of re-learning a forgotten task---is small when the backbone alone nearly solves each task. We note that the Bank's warm-start effect remains observable even here (recovery $4.8$ vs $13.3$ steps with vs without the Bank), but the overall advantage is muted. This delineates NSR's operating regime: it is most valuable under constrained representational capacity or hard, forgetting-prone task sequences, and least necessary when a powerful frozen backbone already generalizes across tasks. A full cross-backbone study is left to future work.

\textbf{What the allocation policy contributes.} Our controlled study (Section~\ref{sec:rl-necessity}) shows that, with the TaskKnowledgeBank in place, a parameter-free similarity rule matches or beats a learned RL controller on accuracy, forgetting, and recovery. We therefore do \emph{not} claim a learned policy is necessary; the recommended NSR uses similarity-based reallocation and trains no RL agent. This is a deliberate, evidence-driven simplification: the theory (Theorem~\ref{thm:greedy-subopt}) requires \emph{history-awareness}, and retrieval from the Bank already supplies it. Casting NSR as \emph{retrieval-based subspace memory management}---rather than ``RL selects LoRA groups''---matches what the experiments actually support and avoids overclaiming. Whether a learned controller helps in settings with only partial task similarity (no exact Bank match) is an open question our current evidence does not resolve in its favour.

\textbf{Memory growth.} The Bank stores one compressed LoRA set per task, so its footprint grows linearly with the number of tasks. Experiment~\ref{sec:pruning} shows this is manageable: the compact Bank \emph{parameter} memory (NSR's primary mechanism) is only $\sim$$0.29$\,MB per task, and a top-$K$ retention policy bounds both parameter and total memory while preserving near-instant recovery for retained tasks; evicted tasks degrade gracefully to cold-start, never worse than a memoryless baseline. If distillation exemplars are used they dominate total memory, so they too are bounded by the same cap. Merging similar tasks (rather than evicting) is a promising refinement. A further limitation is that we evaluate only vision tasks.

\section{Conclusion}
\label{sec:conclusion}

NSR reframes continual learning as retrieval-based memory management over parameter subspaces: it compresses each task's LoRA via SVD, reserves it in a TaskKnowledgeBank, and recalls it by similarity to reallocate the active subspace. Theory and experiments agree: memoryless allocation is structurally suboptimal in cyclic environments, a Bank ablation confirms a $10\times$ recovery effect, and the framework generalises to the heterogeneous 5-Datasets benchmark with the highest accuracy and least forgetting among all baselines. A controlled study shows that the memory mechanism---not a learned controller---is the source of these gains, yielding a simple, reproducible method, and a memory-budget analysis shows the compressed Bank stays small ($0.29$\,MB/task) and boundable by a retention cap. NSR offers a theoretically-motivated, controller-agnostic framework for continual learning under fixed capacity.

\appendix
\section{Full Proof of Theorem~\ref{thm:greedy-subopt}}
\label{app:proof}

\textbf{Step 1.} Decompose $V(\pi)=\sum_{c=1}^M\sum_{j=1}^T V_j^c(\pi)$.

\textbf{Step 2 (memoryless on returning tasks).} For $c\ge2$, by Assumption~\ref{asm:grad} $\pi_{\mathrm{mem}}$ cannot identify $\tau_j$ as returning, so it either selects a suboptimal mask (requiring $\Omega(C_{\mathrm{cold}})$ exploration) or, with probability $p_{\mathrm{lucky}}=O(\binom{N}{k}^{-1})$, the right mask but with overwritten parameters (still requiring cold-start). Thus $\E[\text{steps}] \ge (1-p_{\mathrm{lucky}})C_{\mathrm{cold}}(\varepsilon)$.

\textbf{Step 3 (history-aware).} By Assumption~\ref{asm:hist}, $\bv{h}_t$ identifies $\tau_j$; the Bank restores $(A_i^{\tau_j},B_i^{\tau_j})$; by Assumption~\ref{asm:warm} with $\delta{=}0$, $\E[\text{steps}]\le C_{\mathrm{warm}}(\varepsilon,0)=O(1)$.

\textbf{Step 4 (per-return gap).} $V_j^c(\pi_{\mathrm{hist}})-V_j^c(\pi_{\mathrm{mem}}) \ge \Delta_{\mathrm{switch}} - p_{\mathrm{lucky}}C_{\mathrm{cold}}L_{\mathrm{relearn}}$. For large $N$, $p_{\mathrm{lucky}}\to0$.

\textbf{Step 5 (sum).} Cycle 1 contributes $\ge0$; cycles $2,\ldots,M$ contribute $T(M-1)$ returning instances, yielding $V(\pi_{\mathrm{hist}})-V(\pi_{\mathrm{mem}}) \ge T(M-1)\Delta_{\mathrm{switch}}$. \qed

\section{Hyperparameters}
\label{app:hyper}

\begin{table}[h]
\centering
\small
\begin{tabular}{@{}lc@{}}
\toprule
Parameter & Value \\
\midrule
LoRA max rank & 8 \\
Capacity ratio $k/N$ & 0.5 \\
Compression ratio $\rho$ & 0.5 \\
Distillation $\lambda$ & 0.1 \\
Distillation temperature & 2.0 \\
Base LR & $5\times10^{-3}$ \\
PPO LR & $1\times10^{-3}$ \\
PPO clip & 0.2 \\
Entropy coef & 0.02 \\
Epochs/task & 20 \\
Batch size & 64 \\
Seeds & 42, 123, 456 \\
\bottomrule
\end{tabular}
\end{table}

\bibliographystyle{plainnat}

\begin{thebibliography}{99}
\bibitem[Bhat et~al.(2025)]{bhat2025pearl} Bhat, P. et~al. (2025). PEARL: Parameter-efficient adaptive LoRA for continual learning. \emph{arXiv:2505.11998}.
\bibitem[Bransford et~al.(2000)]{bransford2000how} Bransford, J.~D. et~al. (2000). \emph{How People Learn}. National Academies Press.
\bibitem[French(1999)]{french1999catastrophic} French, R.~M. (1999). Catastrophic forgetting in connectionist networks. \emph{Trends Cogn. Sci.}, 3(4).
\bibitem[He et~al.(2016)]{he2016resnet} He, K. et~al. (2016). Deep residual learning. \emph{CVPR}.
\bibitem[Hu et~al.(2022)]{hu2022lora} Hu, E.~J. et~al. (2022). LoRA. \emph{ICLR}.
\bibitem[Huang et~al.(2024)]{huang2024lorahub} Huang, C. et~al. (2024). LoRAHub. \emph{COLM}.
\bibitem[Kirkpatrick et~al.(2017)]{kirkpatrick2017overcoming} Kirkpatrick, J. et~al. (2017). Overcoming catastrophic forgetting. \emph{PNAS}, 114(13).
\bibitem[Liang et~al.(2024)]{liang2024inflora} Liang, Y. et~al. (2024). InfLoRA. \emph{CVPR}.
\bibitem[Lopez-Paz \& Ranzato(2017)]{lopezpaz2017gradient} Lopez-Paz, D., Ranzato, M. (2017). GEM. \emph{NeurIPS}.
\bibitem[Mallya \& Lazebnik(2018)]{mallya2018packnet} Mallya, A., Lazebnik, S. (2018). PackNet. \emph{CVPR}.
\bibitem[McClelland et~al.(1995)]{mcclelland1995complementary} McClelland, J.~L. et~al. (1995). Complementary learning systems. \emph{Psych. Review}, 102(3).
\bibitem[McCloskey \& Cohen(1989)]{mccloskey1989catastrophic} McCloskey, M., Cohen, N.~J. (1989). Catastrophic interference. \emph{Psych. Learn. Motiv.}, 24.
\bibitem[Olshausen \& Field(2004)]{olshausen2004sparse} Olshausen, B.~A., Field, D.~J. (2004). Sparse coding. \emph{Curr. Opin. Neurobiol.}, 14(4).
\bibitem[Rolnick et~al.(2019)]{rolnick2019experience} Rolnick, D. et~al. (2019). Experience replay for continual learning. \emph{NeurIPS}.
\bibitem[Rusu et~al.(2016)]{rusu2016progressive} Rusu, A.~A. et~al. (2016). Progressive neural networks. \emph{arXiv:1606.04671}.
\bibitem[Schacter et~al.(2007)]{schacter2007constructive} Schacter, D.~L. et~al. (2007). Remembering the past to imagine the future. \emph{Nat. Rev. Neurosci.}, 8(9).
\bibitem[Schulman et~al.(2017)]{schulman2017proximal} Schulman, J. et~al. (2017). PPO. \emph{arXiv:1707.06347}.
\bibitem[Tononi \& Cirelli(2006)]{tononi2006sleep} Tononi, G., Cirelli, C. (2006). Sleep and synaptic homeostasis. \emph{Sleep Med. Rev.}, 10(1).
\bibitem[Wang et~al.(2023a)]{wang2023olora} Wang, Q. et~al. (2023a). O-LoRA. \emph{arXiv:2310.12963}.
\bibitem[Wang et~al.(2023b)]{wang2023hideprompt} Wang, Y. et~al. (2023b). Hierarchical decomposition of prompt-based CL. \emph{NeurIPS}.
\bibitem[Wang et~al.(2025)]{wang2025plan} Wang, X. et~al. (2025). PLAN. \emph{ICCV}.
\bibitem[Wu et~al.(2025)]{wu2025sdlora} Wu, K. et~al. (2025). SD-LoRA. \emph{ICLR}.
\bibitem[Zenke et~al.(2017)]{zenke2017continual} Zenke, F. et~al. (2017). Synaptic intelligence. \emph{ICML}.
\end{thebibliography}

\end{document}